\let\proof\@undefined
\let\endproof\@undefined
\newtheorem{theorem}{Theorem}[section]
\newtheorem{proposition}[theorem]{Proposition}
\newtheorem{corollary}[theorem]{Corollary}
\newtheorem{definition}[theorem]{Definition}
\newtheorem{remark}[theorem]{Remark}
\newtheorem{example}[theorem]{Example}
\newtheorem{problem}[theorem]{Problem}
\newcommand{\RMA}{\mathrm{A}}
\newcommand{\RMF}{\mathrm{F}}
\newcommand{\RMM}{\mathrm{M}}
\newcommand{\RMP}{\mathrm{P}}
\newcommand{\RMT}{\mathrm{T}}
\DeclareMathAlphabet{\mathpzc}{OT1}{pzc}{m}{it}
\newcommand{\CA}{\mathcal{A}}
\newcommand{\CE}{\mathcal{E}}
\newcommand{\CF}{\mathcal{F}}
\newcommand{\CL}{\mathcal{L}}
\newcommand{\CM}{\mathcal{M}}
\newcommand{\CQ}{\mathcal{Q}}
\newcommand{\BFA}{\mathbf{A}}
\newcommand{\BFF}{\mathbf{F}}
\newcommand{\BFM}{\mathbf{M}}
\newcommand{\BFP}{\mathbf{P}}
\newcommand{\BFT}{\mathbf{T}}
\newcommand{\BFV}{\mathbf{V}}
\newcommand{\notltl}{\neg}
\newcommand{\andltl}{\wedge}
\newcommand{\orltl}{\vee}
\newcommand{\Next}{\mathbf{X}}
\newcommand{\Event}{\mathbf{F}}
\newcommand{\Until}{\mathcal{U}}
\newcommand{\True}{\top}
\newcommand{\prop}{\mathtt{p}}
\newcommand{\ie}{{\it i.e.},\;}
\newcommand{\eg}{{\it e.g.},\;}
\newcommand\oprocendsymbol{\hbox{$\square$}}
\newcommand\oprocend{\relax\ifmmode\else\unskip\hfill\fi\oprocendsymbol}
\title{\LARGE \bf Incremental Control Synthesis in Probabilistic Environments with Temporal Logic Constraints}
\author{Alphan Ulusoy$^\star$%
 		\quad Tichakorn Wongpiromsarn$^\dagger$%
 		\quad Calin Belta$^\star$%
 		\thanks{This work was supported in part by ONR-MURI N00014-09-1051, ONR MURI N00014-10-10952, NSF CNS-0834260 and NSF CNS-1035588.}%
 		\thanks{$^\star$ Division of Systems Engineering, Boston University, Boston, MA 02215 (\ulusoy, \belta)}%
 		\thanks{$^\dagger$ Singapore-MIT Alliance for Research and Technology, Singapore 117543 (\nok)}}%
\renewcommand{\langle}{(}
\renewcommand{\rangle}{)}
\begin{document}

\maketitle
\thispagestyle{empty}
\pagestyle{empty}

\begin{abstract}
In this paper, we present a method for optimal control synthesis of a plant that
interacts with a set of agents in a graph-like environment. The control specification is given as a temporal logic statement about some properties that hold at the vertices of the environment. The plant is assumed to be deterministic, while the agents are probabilistic Markov models. The goal is to control the plant such that the probability of satisfying a syntactically co-safe Linear Temporal Logic formula is maximized. 
We propose a computationally efficient incremental approach based on the fact that temporal logic verification is computationally cheaper than synthesis.
We present a case-study where we compare our approach to the classical non-incremental approach in terms of computation time and memory usage.
\end{abstract}

\section{Introduction\label{sec:introduction}} 
Temporal logics~\cite{EAE:90}, such as Linear Temporal Logic (LTL) and Computation Tree Logic (CTL), are traditionally used for verification of non-deterministic and probabilistic systems~\cite{CB-JPK:08}.
Even though temporal logics are suitable for specifying complex missions for control systems, they did not gain popularity in the control community until recently~\cite{PT-GJP:06, MK-CB:10, TW-UT-RMM:10}.

The existing works on control synthesis focus on specifications given in linear time temporal logic.
The systems, which sometimes are obtained through an additional abstraction process \cite{PT-GJP:06,TuYoBeCeBa-CDC10}, have finitely many states. 
With few exceptions \cite{Chatterjee2012}, their states are fully observable. 
For such systems, control strategies can be synthesized through exhaustive search of the state space.
If the system is deterministic, model checking tools can be easily adapted to generate control strategies~\cite{MK-CB:10}.
If the system is non-deterministic, the control problem can be mapped to the solution of a Rabin game~\cite{WT:02,TuYoBeCeBa-CDC10}, or a simpler B\"uchi \cite{KlBe-HSCC08-book} or GR(1) game~\cite{HKG-GF-GFP:07}, if the specification is restricted to fragments of LTL.
For probabilistic systems, the LTL control synthesis problem reduces to computing a control policy for a Markov Decision Process (MDP)~\cite{AB-LDA:95,MK-GN-DP:02,DiSmBeRu-CDC-2011}.

In this work, we consider mission specifications expressed as syntactically co-safe LTL formulas \cite{OK-MYV:01}.
We focus on a particular type of a multi-agent system formed by a deterministically controlled plant and a set of independent, probabilistic, uncontrollable agents, operating on a common, graph-like environment. 
An illustrative example is a car (plant) approaching a pedestrian crossing, while there are some pedestrians (agents) waiting to cross or already crossing the road.
As the state space of the system grows exponentially with the number of pedestrians, one may not be able to utilize any of the existing approaches under computational resource constraints when there is a large number of pedestrians.

We partially address this problem by proposing an incremental control synthesis method that exploits the independence between the components of the system, \ie the plant modeled as a deterministic transition system and the agents, modeled as Markov chains, and the fact that verification is computationally cheaper than synthesis.
We aim to synthesize a plant control strategy that maximize the probability of satisfying a mission specification given as a syntactically co-safe LTL formula.
Our method initially considers a considerably smaller agent subset and synthesizes a control policy that maximizes the probability of satisfying the mission specification for the subsystem formed by the plant and this subset.
This control policy is then verified against the remaining agents.
At each iteration, we remove transitions and states that are not needed in subsequent iterations. This leads to a significant reduction in computation time and memory usage.
It is important to note that our method does not need to run to completion.
A sub-optimal control policy can be obtained by forcing termination at a given iteration if the computation is performed under stringent resource constraints. 
It must also be noted that our framework easily extends to the case when the plant is a Markov Decision Process, and we consider a deterministic plant only for simplicity of presentation.
We experimentally evaluate the performance of our approach and show that our method clearly outperforms existing non-incremental approaches.
Various methods that also use verification during incremental synthesis have been previously proposed in \cite{SJ-SG-SS-AT:2010,SG-SJ-SS-AT-RV:2011}.
However, the approach that we present in this paper is, to the best of our knowledge, the first use of verification guided incremental synthesis in the context of probabilistic systems.

The rest of the paper is organized as follows:
In Sec.~\ref{sec:preliminaries}, we give necessary definitions and some preliminaries in formal methods. The control synthesis problem is formally stated in Sec.~\ref{sec:problem} and the solution is presented in Sec.~\ref{sec:sol}.
Experimental results are included in Sec.~\ref{sec:impl}. 
We conclude with final remarks in Sec.~\ref{sec:conclusions}.

\section{Preliminaries \label{sec:preliminaries}}

For a set $\Sigma$, we use $|\Sigma|$ and $2^\Sigma$ to denote its cardinality and power set, respectively.
A (finite) word $\omega$ over a set $\Sigma$ is a sequence of symbols $\omega = \omega^0\ldots\omega^l$ such that $\omega^i \in \Sigma\;\forall i = 0,\ldots,l$.

\begin{definition}[\bf Transition System]
\label{def:TS}
A transition system (TS) is a tuple $\BFT := (\CQ_\RMT, q_\RMT^0, \CA_\RMT, \alpha_\RMT, \delta_\RMT, \Pi_\RMT, \CL_\RMT)$, where
\begin{itemize}
\item $\CQ_\RMT$ is a finite set of states; %
\item $q_\RMT^0 \in \CQ_\RMT$ is the initial state; %
\item $\CA_\RMT$ is a finite set of actions; %
\item $\alpha_\RMT: \CQ_\RMT \to 2^{\CA_\RMT}$ is a map giving the set of actions available at a state;%
\item $\delta_\RMT \subseteq \CQ_\RMT \times \CA_\RMT \times \CQ_\RMT$ is the transition relation; %
\item $\Pi_\RMT$ is a finite set of atomic propositions; %
\item $\CL_\RMT:\CQ_\RMT\to 2^{\Pi_\RMT}$ is a satisfaction map giving the set of atomic propositions satisfied at a state.%
\end{itemize}
\end{definition}

\begin{definition}[\bf Markov Chain]
\label{def:MC}
A (discrete-time, labelled) Markov chain (MC) is a tuple $\BFM := (\CQ_\RMM, q_\RMM^0, \delta_\RMM, \Pi_\RMM, \CL_\RMM)$, where $\CQ_\RMM$, $\Pi_\RMM$, and $\CL_\RMM$ are the set of states, the set of atomic propositions, and the satisfaction map, respectively, as in Def.~\ref{def:TS}, and
\begin{itemize}
\item $q_\RMM^0 \in \CQ_\RMM$ is the initial state; %
\item $\delta_\RMM: \CQ_\RMM \times \CQ_\RMM \to [0,1]$ is the transition probability function that satisfies $\sum_{q'\in\CQ_\RMM} \delta(q,q') = 1\; \forall q\in\CQ_\RMM$. %
\end{itemize}
\end{definition}

In this paper, we are interested in temporal logic missions over a finite time horizon and we use syntactically co-safe LTL formulas~\cite{AB-LEK-MYV:10} to specify them.
Informally, a syntactically co-safe LTL formula over the set $\Pi$ of atomic propositions comprises boolean operators $\notltl$ (negation), $\orltl$ (disjunction) and $\andltl$ (conjunction), and temporal operators $\Next$ (next), $\Until$ (until) and $\Event$ (eventually).
Any syntactically co-safe LTL formula can be written in positive normal form, where the negation operator $\notltl$ occurs only in front of atomic propositions.
For instance, $\Next\,\prop$ states that at the next position of the word, proposition $\prop$ is true.
The formula $\prop_1\,\Until\,\prop_2$ states that there is a future position of the word when proposition $\prop_2$ is true, and proposition $\prop_1$ is true at least until $\prop_2$ is true.
For any syntactically co-safe LTL formula $\phi$ over a set $\Pi$, one can construct a FSA with input alphabet $2^{\Pi}$ accepting all and only finite words over $2^\Pi$ that satisfy $\phi$, which is defined next.

\begin{definition}[\bf Finite State Automaton]
\label{def:buchi}
A (deterministic) finite state automaton (FSA) is a tuple $\BFF := (\CQ_\RMF,q_\RMF^0,\Sigma_\RMF,\delta_\RMF,\CF_\RMF)$, where %
\begin{itemize}
\item $\CQ_\RMF$ is a finite set of states; %
\item $q_\RMF^0 \in \CQ_\RMF$ is the initial state; %
\item $\Sigma_\RMF$ is an input alphabet; %
\item $\delta_\RMF: \CQ_\RMF \times \Sigma_\RMF \times \CQ_\RMF$ is a deterministic transition relation; %
\item $\CF_\RMF\subseteq \CQ_\RMF$ is a set of accepting (final) states.
\end{itemize}
\end{definition}
A \emph{run} of $\BFF$ over an input word $\omega=\omega^0\omega^1\ldots\omega^l$ where $\omega^i\in\Sigma_\RMF\;\forall i=0\ldots l$ is a sequence $r_\RMF=q^0 q^1\ldots q^lq^{l+1}$, such that $(q^i,\omega^i,q^{i+1}) \in \delta_\RMF\;\forall i=0\ldots l$ and $q^0=q^0_\RMF$.
An FSA $\BFF$ accepts a word over $\Sigma_\RMF$ if and only the corresponding run ends in some $q \in \CF_\RMF$.

\begin{definition}[\bf Markov Decision Process]
\label{def:MDP}
A Markov decision process (MDP) is a tuple $\BFP := (\CQ_\RMP, q_\RMP^0, \CA_\RMP, \alpha_\RMP, \delta_\RMP, \Pi_\RMP, \CL_\RMP)$, where
\begin{itemize}
\item $\CQ_\RMP$ is a finite set of states; %
\item $q_\RMP^0 \in \CQ_\RMP$ is the initial state; %
\item $\CA_\RMP$ is a finite set of actions; %
\item $\alpha_\RMP: \CQ_\RMP \to 2^{\CA_\RMP}$ is a map giving the set of actions available at a state;%
\item $\delta_\RMP: \CQ_\RMP \times \CA_\RMP \times \CQ_\RMP \to [0,1]$ is the transition probability function that satisfies $\sum_{q'\in\CQ_\RMP} \delta(q,a,q') = 1\; \forall q\in\CQ_\RMP, a\in\alpha_\RMP(q)$ and $\sum_{q'\in\CQ_\RMP} \delta(q,a,q') = 0\; \forall q\in\CQ_\RMP, a\not\in\alpha_\RMP(q)$.%
\item $\Pi_\RMP$ is a finite set of atomic propositions; %
\item $\CL_\RMP:\CQ_\RMP\to 2^{\Pi_\RMP}$ is a map giving the set of atomic propositions satisfied in a state.%
\end{itemize}
\end{definition}
For an MDP $\BFP$, we define a stationary policy $\mu_\RMP: \CQ_\RMP \to \CA_\RMP$ such that for a state $q \in \CQ_\RMP$, $\mu_\RMP(q) \in \alpha_\RMP(q)$.
This stationary policy can then be used to resolve all nondeterministic choices in $\BFP$ by applying action $\mu(q)$ at each $q \in \CQ_\RMP$.
A path of $\BFP$ under policy $\mu_\RMP$ is a finite sequence of states $r_\RMP^{\mu_\RMP} = q^0q^1\ldots q^l$ such that $l\geq0$, $q^0 = q^0_\RMP$ and $\delta_\RMP(q^{k-1},\mu_\RMP(q^{k-1}),q^{k})>0$ $\forall k\in [1,l]$.
A path $r_\RMP^{\mu_\RMP}$ generates a finite word $\CL_\RMP(r_\RMP^{\mu_\RMP}) = \CL_\RMP(q^0)\CL_\RMP(q^1)\ldots\CL_\RMP(q^l)$ where $\CL_\RMP(q^k)$ is the set of atomic propositions satisfied at state $q^k$.
Next, we use $Paths_\RMP^{\mu_\RMP}$ to denote the set of all paths of $\BFP$ under a policy $\mu_\RMP$.
Finally, we define $P^{\mu_\RMP}(\phi)$ as the probability of satisfying $\phi$ under policy $\mu_\RMP$.
\begin{remark}
Syntactically co-safe LTL formulas have infinite time semantics, thus they are actually interpreted over infinite words~\cite{AB-LEK-MYV:10}.
Measurability of languages satisfying LTL formulas is also defined for infinite words generated by infinite paths~\cite{CB-JPK:08}.
However, one can determine whether a given infinite word satisfies a syntactically co-safe LTL formula by considering only a finite prefix of it.
It can be easily shown that our above definition of $Paths^{\mu_\RMP}_\RMP$ inherits the same measurability property given in~\cite{CB-JPK:08}.
\end{remark}

\section{Problem Formulation and Approach \label{sec:problem}}

In this section we introduce the control synthesis problem with temporal constraints for a system that models a plant operating in the presence of probabilistic independent agents.

\subsection{System Model}

Consider a system consisting of a deterministic plant that we can control (\eg a robot) and $n$ agents operating in an environment modeled by a graph $\CE = (V, \rightarrow_\CE,\CL_\CE,\Pi_\CE)$, where $V$ is the set of vertices, $\rightarrow_\CE \subseteq V\times V$ is the set of edges, and $\CL_\CE$ is the labeling function that maps each vertex to a proposition in $\Pi_\CE$.
For example, $\CE$ can be the quotient graph of a partitioned environment, where $V$ is a set of labels for the regions in the partition and $\rightarrow_\CE$ is the corresponding adjacency relation (see Figs.~\ref{fig:drawing},~\ref{fig:simple_models}).
Agent $i$ is modeled as an MC $\BFM_i = (\CQ_i, q_i^0, \delta_i, \Pi_i, \CL_i)$, with  $\CQ_i\subseteq V$ and $\delta_i\subseteq\rightarrow_\CE$, $i=1,\ldots,n$. 
The plant is assumed to be a deterministic transition system TS $\BFT = (\CQ_\RMT,q_\RMT^0,\CA_\RMT,\alpha_\RMT,\delta_\RMT,\Pi_\RMT,\CL_\RMT)$, where $\CQ_\RMT\subseteq V$ and $\delta_t\subseteq\rightarrow_\CE$.
We assume that all components of the system (the plant and the agents) make transitions synchronously by picking edges of the graph.
We also assume that the state of the system is perfectly known at any given instant and we can control the plant but we have no control over the agents.

\begin{figure}
\centering 
	\includegraphics{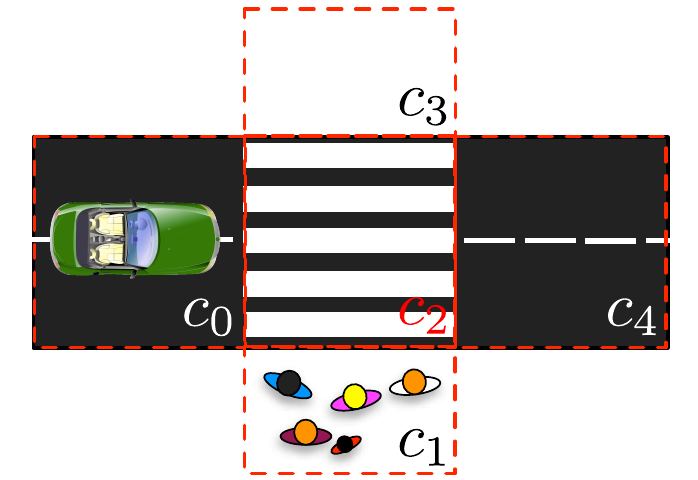}
   \caption{A partitioned road environment, where a car (plant) is required to reach $c_4$ without colliding with any of the pedestrians (agents).}
   \label{fig:drawing}
\end{figure}

We define the sets of propositions and labeling functions of the individual components of the system such that they inherit the propositions of their current vertex from the graph while preserving their own identities. Formally, we have $\Pi_\RMT=\{(\RMT,\CL_\CE(q)) | q\in \CQ_\RMT\}$ and $\CL_\RMT(q)=(\RMT,\CL_\CE(q))$ for the plant, and $\Pi_i=\{(i,\CL_\CE(q))|q\in \CQ_i\}$ and $\CL_i(q)=(i,\CL_\CE(q))$ for agent $i$.
Finally, we define the set $\Pi$ of propositions as $\Pi=\Pi_\RMT\cup\Pi_i\cup\ldots\cup\Pi_n\subseteq\{(i,p) | i = \{\RMT,0,\ldots,n\}, p\in\Pi_\CE\}$.

\subsection{Problem Formulation\label{sec:sub:prob}}

As it will become clear in Sec. \ref{sec:sub:syn}, the joint behavior of the plant and agents in the graph environment can be modeled by the parallel composition of the TS and MC models described above, which takes the form of an MDP (see Def. \ref{def:MDP}). 
Given a syntactically co-safe LTL formula $\phi$ over $\Pi$, our goal is to synthesize a policy for this MDP, which we will simply refer to as the {\it system}, such that the probability of satisfying $\phi$ is either maximized or above a given threshold.  
Since we assume perfect state information, the plant can implement a control policy computed for the system, i.e,  based on
its state and the state of all the other agents. As a result, we will not distinguish between a control policy for the plant and a control policy for the system, and we will refer to it simply as {\em control policy}.
We can now formulate the main problem considered in this paper:

\begin{problem}
\label{prb:problem}
Given a system described by a plant $\BFT$ and a set of agents $\BFM_1,\ldots,\BFM_n$ operating on a graph $\mathcal{E}$, and given a specification in the form of a syntactically co-safe LTL formula $\phi$ over $\Pi$, synthesize a control policy $\mu^\star$ that satisfies the following objective: (a) If a probability threshold $p_{thr}$ is given, the probability that the system satisfies $\phi$ under $\mu^\star$ exceeds $p_{thr}$. (b) Otherwise, $\mu^\star$ maximizes the probability that the system satisfies $\phi$.
If no such policy exists, report failure.
\end{problem}

As will be shown in Sec.~\ref{sec:sub:comp}, the parallel composition of MDP and MC models also takes the form of an MDP.
Hence, our approach can easily accommodate the case where the plant is a Markov Decision Process.
We consider a deterministic plant only for simplicity of presentation.

\begin{example}
\label{exp:simple}
Fig.~\ref{fig:drawing} illustrates a car in a 5-cell environment with 5 pedestrians, where $\CL_\CE(v)=v$ for $v\in\{c_0,\ldots,c_4\}$.
Fig.~\ref{fig:simple_models} illustrates the TS $\BFT$ and the MCs $\BFM_1,\ldots,\BFM_5$ that model the car and the pedestrians.
The car is required to reach the end of the crossing ($c_4$) without colliding with any of the pedestrians.
To enforce this behavior, we write our specification as 
\begin{equation}
\label{eqn:phi}
\phi := \left(\notltl \bigvee_{i=1\ldots5,j=0\ldots4} ((\RMT,c_j)\andltl (i,c_j)) \right) \,\Until\,(\RMT,c_4).
\end{equation}
The deterministic FSA that corresponds to $\phi$ is given in Fig.~\ref{fig:simple_fsa}, where $\mathtt{col}=\bigvee_{i=1\ldots5,j=0\ldots4} ((\RMT,c_j)\andltl (i,c_j))$ and $\mathtt{end} = (\RMT,c_4)$.
\end{example}

\begin{figure}
\subfigure{
	\begin{tikzpicture}[->,>=stealth',shorten >=1pt,auto,node distance=2cm]
	  \tikzstyle{every state}=[circle,thick,draw=blue!75,minimum size=6mm]

	  \node [initial,initial text=,state] (c0) at (0,2.4) {$c_0$};
	   \node [state] (c2) at (0,1.2){$c_2$};
	   \node [state,label=left:$\mathtt{end}$,label=below:$\BFT$] (c4) at (0,0) {$c_4$};
	   
	   \path (c0) edge [loop right] node {wait} (c0)
	                     edge node [] {go} (c2)
	             (c2) edge [loop right] node {wait} (c2)
	                     edge node [] {go} (c4)
						 (c4) edge [loop right] node {wait} (c4);
	\end{tikzpicture}
}%
\subfigure{
	\begin{tikzpicture}[->,>=stealth',shorten >=1pt,auto,node distance=2cm]
	  \tikzstyle{every state}=[circle,thick,draw=blue!75,minimum size=6mm]

	   \node [initial,initial text=,state] (c1) at (0,2.4) {$c_1$};
	   \node [state] (c2) at (0,1.2){$c_2$};
	   \node [state,label=below:$\BFM_1 \ldots \BFM_4$] (c3) at (0,0) {$c_3$};
	   
	   \path (c1) edge [loop right] node {$0.6$} (c1)
	                     edge node [] {$0.4$} (c2)
	             (c2) edge [loop right] node {$0.2$} (c2)
	                     edge node [] {$0.8$} (c3)
	             (c3) edge [loop right] node {$1$} (c3);
	\end{tikzpicture}
}%
\subfigure{
	\begin{tikzpicture}[->,>=stealth',shorten >=1pt,auto,node distance=2cm]
	  \tikzstyle{every state}=[circle,thick,draw=blue!75,minimum size=6mm]

	   \node [initial,initial text=,state] (c1) at (0,2.4) {$c_1$};
	   \node [state] (c2) at (0,1.2){$c_2$};
	   \node [state,label=below:$\BFM_5$] (c3) at (0,0) {$c_3$};
	   
	   \path (c1) edge [loop right] node {$0.6$} (c1)
	                     edge [bend left] node {$0.4$} (c2)
	             (c2) edge [loop right] node {$0.2$} (c2)
	                     edge [bend left]  node{$0.4$} (c3)
	                     edge [bend left] node {$0.4$} (c1)
	             (c3) edge [loop right] node {$0.6$} (c3)
	                     edge [bend left] node {$0.4$} (c2);
	\end{tikzpicture}
}
   \caption{TS $\BFT$ and MCs $\BFM_1\ldots\BFM_5$ that model the car and the pedestrians.}
   \label{fig:simple_models}
\end{figure}
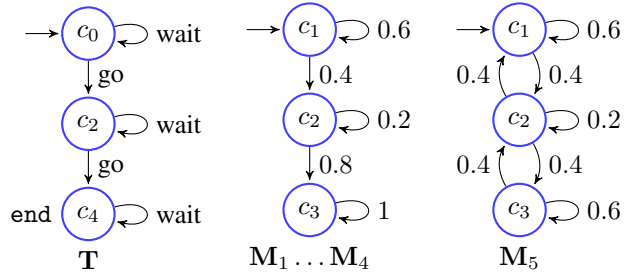

\begin{figure}
\centering 
\begin{tikzpicture}[->,>=stealth',shorten >=1pt,auto,node distance=2cm]
  \tikzstyle{every state}=[circle,thick,draw=blue!75,minimum size=6mm]

   \node [initial,initial text=,state,label=above:$\BFF$] (q0) at (0,0) {$q_0$};
   \node [state, double, ] (q1) at (2,0){$q_1$};
   \node [state] (q2) at (4,0) {$q_2$};
   
   \path (q0) edge [loop below] node {$\notltl \mathtt{col} \andltl \notltl \mathtt{end}$} (q0)
              edge node {$\mathtt{end}$} (q1) 
			  edge [bend left] node {$\mathtt{col} \andltl \notltl \mathtt{end}$} (q2)
             (q1) edge [loop below] node {$\True$} (q1)
             (q2) edge [loop below] node {$\True$} (q2);
\end{tikzpicture}
	\caption{Deterministic FSA $\BFF$ that corresponds to $\phi=\notltl \mathtt{col}\,\Until\,\mathtt{end}$ where $\mathtt{col}=\bigvee_{i=1\ldots5,j=0\ldots4} (c^\RMT_j\andltl c^i_j)$ and $\mathtt{end} = c^\RMT_4$. $q_0$ and $q_1$ are initial and final states, respectively.}
   \label{fig:simple_fsa}
\end{figure}

\subsection{Solution Outline\label{sec:sub:sol}}

One can directly solve Prob.~\ref{prb:problem} by reducing it to a Maximal Reachability Probability (MRP) problem on the MDP modeling the overall system~\cite{LDA:97}.
This approach, however, is very resource demanding as it scales exponentially with the number agents.
As a result, the environment size and the number of agents that can be handled in a reasonable time frame and with limited memory are small.
To address this issue, we propose a highly efficient incremental control synthesis method that exploits the independence between the system components and the fact that verification is less demanding than synthesis.
At each iteration $i$, our method will involve the following steps: synthesis of an optimal control policy considering only some of the agents (Sec.~\ref{sec:sub:syn}), verification of this control policy with respect to the complete system (Sec.~\ref{sec:sub:ver}) and minimization of the system model under the guidance of this policy (Sec.~\ref{sec:sub:min}).

\section{Problem Solution \label{sec:sol}}

Our solution to Prob.~\ref{prb:problem} is given in the form of Alg.~\ref{alg:ics}.
In the rest of this section, we explain each of its steps in detail.

\setlength{\algomargin}{1.5em}
\begin{algorithm}[h]
\DontPrintSemicolon
\SetInd{0.5em}{0.5em}
\KwIn{$\BFT,\BFM_1,\ldots,\BFM_n, \phi, (p_{thr}).$}
\KwOut{$\mu^\star$ s.t. $P^{\mu^\star}(\phi) \geq P^{\mu}(\phi)\,\forall \mu$ if $p_{thr}$ is not given, otherwise $P^{\mu^\star}(\phi) > p_{thr}$.}
$\CM \leftarrow \{\BFM_1,\ldots,\BFM_n\}$.\;
Construct FSA $\BFF$ corresponding to $\phi$.\;
$\mu^\star\leftarrow\emptyset,\,P_\CM^{\mu^\star}(\phi)\leftarrow 0,\,\CM_0 \leftarrow \emptyset,\,\BFA_0 \leftarrow \BFT,\,i\leftarrow 1$.\;
Process $\phi$ to form $\CM_i^{new}$.\;
\While{True} {
$\CM_i \leftarrow \CM_{i-1} \cup \CM_i^{new}$.\;
$\BFA_i \leftarrow \BFA_{i-1} \otimes \CM_i^{new}$.\;
$\BFP_i \leftarrow \BFA_i \otimes \BFF$.\;
Synthesize $\mu_i$ that maximizes $P^{\mu_i}_{\CM_i}(\phi)$ using $\BFP_i$.\;
\If{$p_{thr}$ given}{
\If{$P^{\mu_i}_{\CM_i}(\phi) < p_{thr}$}
{Fail: $\nexists \mu$ such that $P^{\mu}(\phi) \geq p_{thr}$.}
\ElseIf{$\CM_i = \CM$}
{Success: $\mu^\star \leftarrow \mu_i$, Return $\mu^\star$.}
\Else{Continue with verification on line 20.}
}
\ElseIf{$\CM_i = \CM$} {
{Success: $\mu^\star \leftarrow \mu_i$, Return $\mu^\star$.}
}
\Else{
Obtain the MC $\BFM^{\mu_i}_{\CM_i}$ induced on $\BFP_i$ by $\mu_i$.\;
$\overline{\CM_i} \leftarrow \CM \setminus \CM_i$.\;
$\BFM^{\mu_i}_{\CM} \leftarrow \BFM^{\mu_i}_{\CM_i}\otimes \overline{\CM}_i$.\;
$\BFV_i \leftarrow \BFM^{\mu_i}_{\CM}\otimes\BFF$.\;
Compute $P^{\mu_i}_\CM(\phi)$ using $\BFV_i$.\;
\If{$P^{\mu_i}_\CM(\phi) > P^{\mu^\star}_\CM(\phi)$}{
$\mu^\star \leftarrow \mu_i,\,P^{\mu^\star}_\CM(\phi)\leftarrow P^{\mu_i}_\CM(\phi)$.
}
\If{$p_{thr}$ given and $P_\CM^{\mu^\star}(\phi)>p_{thr}$} {
	Success: Return $\mu^\star$.
}
\Else{
	Set $\CM_{i+1}^{new}$ to some agent $\BFM_j \in \overline{\CM_i}$.\;
	Minimize $\BFA_i$.\;
	Increment $i$.\;
}
}
}
\caption{{\sc Incremental-Control-Synthesis}\label{alg:ics}}
\end{algorithm}

\subsection{Parallel Composition of System Components\label{sec:sub:comp}}

Given the set $\CM=\{\BFM_1,\ldots,\BFM_n\}$ of all agents, we use $\CM_i\subseteq\CM$ to denote its subset used at iteration $i$.
Then, we define the synchronous parallel composition $\BFT\otimes\CM_i$ of $\BFT$ and agents in $\CM_i=\{\BFM_{i1},\ldots,\BFM_{ij}\}$ for different types of $\BFT$ as follows.

If $\BFT$ is a TS, then we define $\BFT\otimes\CM_i$ as the MDP $\BFA = (\CQ_\RMA, q_\RMA^0, \CA_\RMA, \alpha_\RMA, \delta_\RMA, \Pi_\RMA, \CL_\RMA) = \BFT \otimes \CM_i$, such that
\begin{itemize}
\item $\CQ_\RMA \subseteq \CQ_\RMT \times \CQ_{i1} \times \ldots \times \CQ_{ij}$ such that a state $q=\langle q_\RMT,q_{i1},\ldots,q_{ij}\rangle$ exists iff it is reachable from the initial states;
\item $q_\RMA^0 = \langle q_\RMT^0, q_{i1}^0, \ldots,q_{ij}^0\rangle$;
\item $\CA_\RMA = \CA_\RMT$;
\item $\alpha_\RMA(q) = \alpha_\RMT(q_\RMT)$, where $q_\RMT$ is the element of $q$ that corresponds to the state of $\BFT$;
\item $\Pi_\RMA = \Pi_\RMT\cup\Pi_{i1}\cup\ldots\cup\Pi_{ij}$;
\item $\CL_\RMA(q) = \CL_\RMT(q_\RMT) \cup \CL_{i1}(q_{i1}) \cup \ldots\cup \CL_{ij}(q_{ij})$;
\item $\delta_\RMA(q=\langle q_\RMT,q_{i1},\ldots,q_{ij}\rangle,a,q'=\langle q_\RMT',q_{i1}',\ldots,q_{ij}'\rangle) = 1\{(q_\RMT,a,q_\RMT') \in \delta_T\}\times\delta(q_{i1},q_{i1}')\times\ldots\times\delta(q_{ij},q_{ij}')$,
\end{itemize}
where $1\{\cdot\}$ is the indicator function.

If $\BFT$ is an MDP, then we define $\BFT\otimes\CM_i$ as the MDP $\BFA = (\CQ_\RMA, q_\RMA^0, \CA_\RMA, \alpha_\RMA, \delta_\RMA, \Pi_\RMA, \CL_\RMA) = \BFT \otimes \CM_i$, such that $\CQ_\RMA$, $q_\RMA^0$, $\CA_\RMA$, $\alpha_\RMA$, $\Pi_\RMA$, and $\CL_\RMA$ are as given in the case where $\BFT$ is a TS and
\begin{itemize}
	\item $\delta_\RMA(q=\langle q_\RMT,q_{i1},\ldots,q_{ij}\rangle,a,q'=\langle q_\RMT',q_{i1}',\ldots,q_{ij}'\rangle) = \delta_\RMT(q_\RMT,a,q_\RMT')\times\delta_{i1}(q_{i1},q_{i1}')\times\ldots\times\delta_{ij}(q_{ij},q_{ij}')$.
\end{itemize}

Finally if $\BFT$ is an MC, then we define $\BFT\otimes\CM_i$ as the MC $\BFA = (\CQ_\RMA, q_\RMA^0, \delta_\RMA, \Pi_\RMA, \CL_\RMA) = \BFT \otimes \CM_i$ where $\CQ_\RMA$, $q_\RMA^0$, $\Pi_\RMA$, $\CL_\RMA$ are as given in the case where $\BFT$ is a TS and
\begin{itemize}
	\item $\delta_\RMA(q=\langle q_\RMT,q_{i1},\ldots,q_{ij}\rangle,q'=\langle q_\RMT',q_{i1}',\ldots,q_{ij}'\rangle) = \delta_\RMT(q_\RMT,q_\RMT')\times\delta_{i1}(q_{i1},q_{i1}')\times\ldots\times\delta_{ij}(q_{ij},q_{ij}')$.
\end{itemize}

\subsection{Product MDP and Product MC\label{sec:sub:prod}}

Given the deterministic FSA $\BFF$ that recognizes all and only the finite words that satisfy $\phi$, we define the product of $\BFM\otimes\BFF$ for different types of $\BFM$ as follows.

If $\BFM$ is an MDP, we define $\BFM\otimes\BFF$ as the product MDP $\BFP = (\CQ_\RMP, q_\RMP^0, \CA_\RMP, \alpha_\RMP, \delta_\RMP, \Pi_\RMP, \CL_\RMP) = \BFM \otimes \BFF$, where
\begin{itemize}
\item $\CQ_\RMP \subseteq \CQ_\RMM \times \CQ_\RMF$ such that a state $q$ exists iff it is reachable from the initial states;
\item $q_\RMP^0 = \langle q_\RMM^0,q_\RMF\rangle$ such that $(q_\RMF^0,\CL_\RMM(q_\RMM),q_\RMF)\in\delta_\RMF$;
\item $\CA_\RMP = \CA_\RMM$;
\item $\alpha_\RMP(\langle q_\RMM,q_\RMF\rangle) = \alpha_\RMM(q_\RMM)$;
\item $\Pi_\RMP = \Pi_\RMM$;
\item $\CL_\RMP(\langle q_\RMM,q_\RMF\rangle) = \CL_\RMM(q_\RMM)$;
\item $\delta_\RMP(\langle q_\RMM,q_\RMF\rangle,a,\langle q_\RMM',q'_\RMF\rangle) = 1\{(q_\RMF,\CL_\RMM(q_\RMM'),q_\RMF')\in\delta_\RMF\}$ $\times \delta_\RMM(q_\RMM,a,q_\RMM')$,
\end{itemize}
where $1\{\cdot\}$ is the indicator function.
In this product MDP, we also define the set $\CF_\RMP$ of final states such that a state $q=\langle q_\RMM,q_\RMF\rangle \in \CF_\RMP$ iff $q_\RMF \in \CF_\RMF$, where $\CF_\RMF$ is the set of final states of $\BFF$.

If $\BFM$ is an MC, we define $\BFM\otimes\BFF$ as the product MC $\BFP = (\CQ_\RMP, q_\RMP^0, \delta_\RMP, \Pi_\RMP, \CL_\RMP) = \BFM \otimes \BFF$ where $\CQ_\RMP$, $q_\RMP^0$, $\Pi_\RMP$, $\CL_\RMP$ are as given in the case where $\BFM$ is an MDP and 
\begin{itemize}
	\item $\delta_\RMP(\langle q_\RMM,q_\RMF\rangle,\langle q_\RMM',q'_\RMF\rangle) = 1\{(q_\RMF,\CL_\RMM(q_\RMM'),q_\RMF')\in\delta_\RMF\}\times\delta_\RMM(q_\RMM,q_\RMM')$.
\end{itemize}
In this product MC, we also define the set $\CF_\RMP$ of final states as given above.

\subsection{Initialization\label{sec:sub:init}}

Lines 1 to 4 of Alg.~\ref{alg:ics} correspond to the initialization procedure of our algorithm.
First, we form the set $\CM=\{\BFM_1,\ldots,\BFM_n\}$ of all agents and construct the FSA $\BFF$ that corresponds to $\phi$.
Such $\BFF$ can be automatically constructed using existing tools, \eg \cite{Latvala:EMC2003}.
Since we have not synthesized any control policies so far, we reset the variable $\mu^\star$ that holds the best policy at any given iteration and set the probability $P^{\mu^\star}_\CM(\phi)$ of satisfying $\phi$ under policy $\mu^\star$ in the presence of agents in $\CM$ to 0.
As we have not considered any agents so far, we set the subset $\CM_0$ to be an empty set.
We then set $\BFA_0$, which stands for the parallel composition of the plant $\BFT$ and the agents in $\CM_0$, to $\BFT$.
We also initialize the iteration counter $i$ to 1.

Line 4 of Alg.~\ref{alg:ics} initializes the set $\CM_1^{new}$ of agents that will be considered in the synthesis step of the first iteration of our algorithm.
In order to be able to guarantee completeness, we require this set to be the maximal set of agents that satisfy the mission, \ie the agent subset that can satisfy $\phi$ but not strictly needed to satisfy $\phi$.
To form $\CM_1^{new}$, we first rewrite $\phi$ in positive normal form to obtain $\phi_{pnf}$, where the negation operator $\notltl$ occurs only in front of atomic propositions.
Conversion of $\phi$ to $\phi_{pnf}$ can be performed automatically using De Morgan's laws and equivalences for temporal operators as given in \cite{CB-JPK:08}.
Then, using this fact, we include an agent $\BFM_i\in\CM$ in $\CM_1^{new}$ if any of its corresponding propositions of the form $(i,p),p \in \Pi_i$ appears non-negated in $\phi_{pnf}$.
For instance, given $\phi := \notltl ((3,\mathtt{p3})\andltl(\RMT,\mathtt{p3}))\,\Until\,((1,\mathtt{p1}) \orltl (2,\mathtt{p2}))$, either one of agents $\BFM_1$ and $\BFM_2$ can satisfy the formula, whereas agent $\BFM_3$ can only violate it.
Therefore, for this example we set $\CM_1^{new} = \{\BFM_1,\BFM_2\}$.
In case $\CM_1^{new}=\emptyset$ after this procedure, we form $\CM_1^{new}$ arbitrarily by including some agents from $\CM$ and proceed with the synthesis step of our approach.

\subsection{Synthesis\label{sec:sub:syn}}
Lines 6 to 19 of Alg.~\ref{alg:ics} correspond to the synthesis step of our algorithm.
At the $i^{th}$ iteration, the agent subset that we consider is given by $\CM_i = \CM_{i-1} \cup \CM_i^{new}$ where $\CM_i^{new}$ contains the agents that will be newly considered as provided by the previous iteration's verification stage or by the initialization procedure given in Sec.~\ref{sec:sub:init} if $i$ is 1.
First, we construct the parallel composition $\BFA_i = \BFA_{i-1} \otimes \CM_i^{new}$ of our plant and the agents in $\CM_i$ as described in Sec.~\ref{sec:sub:comp}.
Notice that, we use $\BFA_{i-1}$ to save from computation time and memory as $\BFA_{i-1} \otimes \CM_i^{new}$ is typically smaller than $\BFT \otimes \CM_i$ due to the minimization procedure explained in Sec.~\ref{sec:sub:min}.
Next, we construct the product MDP $\BFP_i = \BFA_i \otimes \BFF$ as explained in Sec.~\ref{sec:sub:prod}.
Then, our control synthesis problem can be solved by solving a maximal reachability probability (MRP) problem on $\BFP_i$ where one computes the maximum probability of reaching the set $\CF_\RMP$ from the initial state $q_\RMP^0$~\cite{LDA:97}, after which the corresponding optimal control policy $\mu_i$ can be recovered as given in \cite{CB-JPK:08,DiSmBeRu-CDC-2011}.
Consequently, at line 9 of Alg.~\ref{alg:ics} we solve the MRP problem on $\BFP_i$ using value iteration to obtain optimal policy $\mu_i$ that maximizes the probability of satisfaction of $\phi$ in the presence of the agents in $\CM_i$.
We denote this probability by $P_{\CM_i}^{\mu_i}(\phi)$, whereas $P^{\mu}(\phi)$ stands for the probability that the complete system satisfies $\phi$ under policy $\mu$.

The steps that we take at the end of the synthesis, \ie lines 10 to 19 of Alg.~\ref{alg:ics}, depends on whether $p_{thr}$ is given or not.
At any iteration $i$, if $p_{thr}$ is given and $P^{\mu_i}_{\CM_i}(\phi) < p_{thr}$, we terminate by reporting that there exists no control policy $\mu : P^\mu(\phi) \geq p_{thr}$ which is a direct consequence of Prop.~\ref{prp:syn}.
If $p_{thr}$ is given and $P^{\mu_i}_{\CM_i}(\phi)\geq p_{thr}$, we consider the following cases.
If $\CM_i=\CM$, we set $\mu^\star$ to $\mu_i$ and return $\mu^\star$ as it satisfies the probability threshold.
Otherwise, we proceed with the verification of $\mu_i$ as there are remaining agents that were not considered during synthesis and can potentially violate $\phi$.
For the case where $p_{thr}$ is not given we consider the current agent subset $\CM_i$.
If $\CM_i=\CM$ we terminate and return $\mu^\star$ as there are no agents left to consider.
Otherwise, we proceed with the verification stage.

\begin{proposition}
\label{prp:syn}
The sequence $\{P^{\mu_i}_{\CM_i}(\phi)\}$ is non-increasing.
\end{proposition}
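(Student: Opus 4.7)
The plan is to show $P^{\mu_i}_{\CM_i}(\phi) \geq P^{\mu_{i+1}}_{\CM_{i+1}}(\phi)$ for every $i \geq 1$ by coupling the two product MDPs. The structural property I would exploit is that, by the construction of $\CM_1^{new}$ in Sec.~\ref{sec:sub:init}, every agent in $\CM \setminus \CM_i$ has none of its atomic propositions appearing non-negated in $\phi_{pnf}$. In particular, the agents added when going from iteration $i$ to $i+1$, namely those in $\CM_{i+1}^{new} = \CM_{i+1} \setminus \CM_i$, contribute only negated literals to $\phi_{pnf}$.

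First I would establish a monotonicity lemma for $\phi$: if $w = \sigma_0 \sigma_1 \ldots$ is an infinite word over $2^\Pi$ and $w' = \sigma'_0 \sigma'_1 \ldots$ is obtained by deleting from each letter all propositions belonging to $\bigcup_{\BFM_j \in \CM_{i+1}^{new}} \Pi_j$, then $w \models \phi$ implies $w' \models \phi$. The proof is by structural induction on $\phi_{pnf}$: in the negated-atomic case $\neg(j,p)$ with $\BFM_j \in \CM_{i+1}^{new}$, the literal automatically holds in $w'$ by construction; in the positive-atomic and negated-atomic cases with $\BFM_j \notin \CM_{i+1}^{new}$, satisfaction is unchanged; and the inductive cases for $\wedge, \vee, \Next, \Until, \Event$ are immediate because $\phi_{pnf}$ has no negation in front of non-atomic subformulas.

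Next I would couple the two systems. Because the agents are mutually independent and independent of the plant, the stationary policy $\mu_{i+1}$ on $\BFP_{i+1}$ can be emulated by a history-dependent randomized policy $\mu'$ on $\BFP_i$ that internally simulates the agents in $\CM_{i+1}^{new}$ according to their MC dynamics, merges them with the observed states of the plant and $\CM_i$, tracks the FSA component that would arise in $\BFP_{i+1}$, and applies $\mu_{i+1}$ at the resulting augmented state. Under this construction the joint law of (plant, $\CM_i$, simulated $\CM_{i+1}^{new}$) trajectories under $\mu'$ on $\BFP_i$ agrees pathwise with the joint law under $\mu_{i+1}$ on $\BFP_{i+1}$. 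Applying the monotonicity lemma pathwise then yields $P^{\mu'}_{\CM_i}(\phi) \geq P^{\mu_{i+1}}_{\CM_{i+1}}(\phi)$. Since for MRP on a finite MDP the supremum over all history-dependent randomized policies is achieved by a stationary deterministic policy~\cite{LDA:97,CB-JPK:08}, we have $P^{\mu_i}_{\CM_i}(\phi) \geq P^{\mu'}_{\CM_i}(\phi)$, which chains with the previous inequality to give the claim.

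The main obstacle will be the monotonicity lemma, specifically its interaction with co-safe semantics: since $\phi$ is interpreted over infinite words but decided by a good finite prefix, I need the deletion of propositions to preserve the existence of a good prefix, which is exactly where positive normal form is essential (equivalently, the good prefixes of $\phi$ are upward-closed in the literals appearing positively and downward-closed in those appearing negatively). The coupling step is otherwise bookkeeping using independence of the agents and the fact that the FSA component of $\BFP_{i+1}$ is a deterministic function of the observed history together with the simulated $\CM_{i+1}^{new}$ trajectory.
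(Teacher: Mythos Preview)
Your proof is correct and shares its core with the paper's: both hinge on the observation that for any agent $\BFM_j \in \CM \setminus \CM_1$, deleting the propositions in $\Pi_j$ from each letter of a satisfying word preserves satisfaction of $\phi$ (your ``monotonicity lemma'', which the paper establishes directly by exhibiting the modified word $\tilde\sigma$ and arguing from the positive normal form). The difference lies in how this lemma is converted into the inequality on optimal probabilities. The paper simply concludes that ``the set of paths that satisfy $\phi$ cannot increase after we add agent $\BFM_j$'' and stops there; it does not spell out how to compare a policy on $\BFP_{i+1}$ with one on $\BFP_i$, nor why the optimal value on the smaller product dominates. Your coupling argument---internally simulating the agents in $\CM_{i+1}^{new}$ to emulate $\mu_{i+1}$ as a history-dependent randomized policy $\mu'$ on $\BFP_i$, and then invoking the fact that the MRP supremum over all such policies is attained by a stationary deterministic one---makes this step explicit and rigorous. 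So the two approaches coincide in spirit, but yours supplies the probability-theoretic bookkeeping (independence, coupling, and policy-class reduction) that the paper leaves implicit.
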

\begin{proof}
As given in Sec.~\ref{sec:sub:init}, $\CM_1$ includes all those agents that can satisfy the propositions that lead to satisfaction of $\phi$.
Let $pref(\phi)$ be the set of finite words that satisfy $\phi$ and let MC $\BFM_j$ of agent $j$ be such that $\BFM_j \not \in \CM_1$.
Consider a finite satisfying word $\sigma$ such that $\sigma = \sigma^0 \sigma^1 \ldots \sigma^l \in pref(\phi)$.
Suppose there exists an index $k \in \{0,\ldots,l\}$ such that for some $q \in \CQ_j$ and $\CL_j(q) \in \sigma^k$.
Then, $\tilde{\sigma} = \sigma^0 \sigma^1 \ldots \sigma^{k-1} \tilde{\sigma}^k \sigma^{k+1} \ldots \sigma^l$ is also in $pref(\phi)$ where $\tilde{\sigma}^k = \sigma^k \setminus \CL_j(q)$.
Now, let $r = q^0 q^1 \ldots q^l$ be a path of the system after including $\BFM_j$.
Let $\omega = \CL(r) = \omega^0 \omega^1 \ldots \omega^l$ be the word generated by $r$.
If $\omega$ satisfies $\phi$, then $\tilde{\omega} = \tilde{\omega}^0 \tilde{\omega}^1 \ldots \tilde{\omega}^l$ also satisfies $\phi$ where $\tilde{\omega}^k = \omega^k \setminus \mathcal{L}_j(q_j^k)$ for each $k\in\{0,\ldots,l\}$ and $q_j^k$ is the state of $\BFM_j$ in $q^k$.
Thus, we conclude that the set of paths that satisfy $\phi$ cannot increase after we add agent $\BFM_j \in \CM\setminus\CM_1$, and the sequence $\{P^{\mu_i}_{\CM_i}(\phi)\}$ is non-increasing such that it attains its maximum value $P^{\mu_1}_{\CM_1}(\phi)$ at the first iteration and does not increase as more agents from $\CM\setminus\CM_1$ are considered in the following iterations.
\end{proof}

\begin{corollary}
	If at any iteration $P^{\mu_i}_{\CM_i}(\phi) < p_{thr}$, then there does not exist a policy $\mu : P^\mu(\phi) \geq p_{thr}$, where $\mu_i$ is an optimal control policy that we compute at the synthesis stage of the $i^{th}$ iteration considering only the agents in $\CM_i$.
\end{corollary}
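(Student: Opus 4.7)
The corollary is essentially a monotonicity-plus-optimality argument, and it follows by combining the non-increasing property established in Proposition~\ref{prp:syn} with the fact that $\mu_i$ is defined as an optimal policy on the restricted subsystem. My plan is to prove the contrapositive: assume some policy $\mu$ achieves $P^{\mu}(\phi) \geq p_{thr}$ on the full system, and derive that $P^{\mu_i}_{\CM_i}(\phi) \geq p_{thr}$ as well.

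First I would note that $\mu_i$ is computed as the policy that maximizes the satisfaction probability on the product MDP $\BFP_i$ built from only the agents in $\CM_i$. Hence, for every admissible control policy $\mu$ (in particular any candidate policy for the full system, suitably restricted to act on the projection onto $\CM_i$), we have the optimality inequality
\begin{equation*}
P^{\mu}_{\CM_i}(\phi) \leq P^{\mu_i}_{\CM_i}(\phi).
\end{equation*}
Next I would invoke the reasoning used in the proof of Proposition~\ref{prp:syn}, but applied to an arbitrary fixed policy $\mu$ rather than to the sequence of optimal policies. The key observation from that proof is path-theoretic: removing the coordinates of any agent $\BFM_j \in \CM \setminus \CM_i$ from a satisfying word still leaves a satisfying word (since $\CM_1 \supseteq $ agents whose propositions appear positively in $\phi_{pnf}$, and agents outside $\CM_1$ can only contribute conjuncts under a negation). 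This gives the monotonicity
\begin{equation*}
P^{\mu}_{\CM}(\phi) \;=\; P^{\mu}(\phi) \;\leq\; P^{\mu}_{\CM_i}(\phi),
\end{equation*}
for any policy $\mu$, since the set of satisfying executions projected onto the smaller system is no smaller.

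Chaining the two inequalities yields $P^{\mu}(\phi) \leq P^{\mu_i}_{\CM_i}(\phi) < p_{thr}$ for every policy $\mu$, contradicting the assumption and proving the corollary. The main obstacle, and the only delicate point, is making sure that the monotonicity argument of Proposition~\ref{prp:syn} applies to an arbitrary policy rather than only to the optimal ones; since the argument there is purely combinatorial at the level of paths and their labels (adding an uncontrolled agent can only restrict, not enlarge, the set of satisfying label sequences), this extension is immediate and does not require new machinery. Everything else is a direct appeal to optimality of $\mu_i$ on $\BFP_i$.
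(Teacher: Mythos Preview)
Your argument is correct in substance, and the paper itself gives no proof at all --- the statement is left as an immediate corollary of Proposition~\ref{prp:syn}. The intended one-line justification is simply: by Proposition~\ref{prp:syn} the optimal satisfaction probabilities $P^{\mu_i}_{\CM_i}(\phi)$ are non-increasing as agents from $\CM\setminus\CM_1$ are added, and the terminal value of this sequence (when $\CM_i=\CM$) is exactly $\max_\mu P^{\mu}(\phi)$; hence $\max_\mu P^\mu(\phi)\le P^{\mu_i}_{\CM_i}(\phi)<p_{thr}$ for every $i$.

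Your route reaches the same conclusion but introduces an extra step that is both unnecessary and slightly delicate: the inequality $P^{\mu}_{\CM_i}(\phi)\le P^{\mu_i}_{\CM_i}(\phi)$ requires interpreting a \emph{full-system} policy $\mu$ as a policy on the restricted system $\CM_i$. A deterministic stationary policy on the full product may genuinely depend on the states of agents in $\CM\setminus\CM_i$, so its ``restriction'' to $\CM_i$ is not a well-defined deterministic stationary policy --- at best it is a randomized or history-dependent one, and you then need to appeal to the standard MDP fact that such policies cannot beat the optimal stationary policy. This is all fixable, but it is extra work that the direct optimal-value comparison avoids entirely: Proposition~\ref{prp:syn} already compares the \emph{optimal} probabilities across agent subsets, so there is no need to fix a policy and track it through the restriction. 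Your path-level remark (that the monotonicity argument is purely combinatorial and thus applies to any fixed policy) is correct, but it is cleaner to use it once at the level of optima rather than twice with a policy-restriction step in between.
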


\subsection{Verification and Selection of $\CM_{i+1}^{new}$\label{sec:sub:ver}}
Lines 20 to 30 of Alg.~\ref{alg:ics} correspond to the verification stage of our algorithm.
In the verification stage, we verify the policy $\mu_i$ that we have just synthesized considering the entire system and accordingly update the best policy so far, which we denote by $\mu^\star$.

Note that $\mu_i$ maximizes the probability of satisfying $\phi$ in the presence of agents in $\CM_i$ and induces an MC by resolving all non-deterministic choices in $\BFP_i$.
Thus, we first obtain the induced Markov Chain $\BFM^{\mu_i}_{\CM_i}$ that captures the joint behavior of the plant and the agents in $\CM_i$ under policy $\mu_i$.
Then, we proceed by considering the agents that were not considered during synthesis of $\mu_i$, \ie agents in $\overline{\CM_i} = \CM \setminus \CM_i$.
In order to account for the existence of the agents that we newly consider, we exploit the independence between the systems and construct the MC $\BFM^{\mu_i}_{\CM} = \BFM^{\mu_i}_{\CM_i}\otimes \overline{\CM}_i$ in line 22.
In lines 23 and 24 of Alg.~\ref{alg:ics}, we construct the product MC $\BFV_i=\BFM^{\mu_i}_{\CM}\otimes\BFF$ and compute the probability $P^{\mu_i}_\CM(\phi)$ of satisfying $\phi$ in the presence of all agents in $\CM$ by computing the probability of reaching $\BFV_i$'s final states from its initial state using value iteration.
Finally, in lines 25 and 26 we update $\mu^\star$  so that $\mu^\star = \mu_i$ if $P^{\mu_i}_\CM(\phi) > P^{\mu^\star}_\CM(\phi)$, \ie if we have a policy that is better than the best we have found so far.
Notice that, keeping track of the best policy $\mu^\star$ makes Alg.~\ref{alg:ics} an anytime algorithm, \ie the algorithm can be terminated as soon as some $\mu^\star$ is obtained.

At the end of the verification stage, if $p_{thr}$ is given and $P_\CM^{\mu^\star}(\phi) \geq p_{thr}$ we terminate and return $\mu^\star$, as it satisfies the given probability threshold.
Otherwise in line 30 of Alg.~\ref{alg:ics}, we pick an arbitrary $\BFM_j \in \overline{\CM}_i$ to be included in $\CM^{i+1}$, which we call the {\em random agent first (RAF)} rule.
Note that, one can also choose to pick the smallest $\BFM_j$ in terms of state and transition count to minimize the overall computation time, which we call the {\em smallest agent first (SAF)} rule.

\begin{proposition}
\label{prp:ver}
The sequence $\{P_\CM^{\mu^\star}(\phi)\}$ is a non-decreasing sequence.
\end{proposition}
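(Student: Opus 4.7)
The plan is to argue directly from the update rule in lines 25--26 of Alg.~\ref{alg:ics}, which is essentially a running maximum. At each iteration $i$ after the verification step, the value $P_\CM^{\mu^\star}(\phi)$ is replaced by $P_\CM^{\mu_i}(\phi)$ only when $P_\CM^{\mu_i}(\phi) > P_\CM^{\mu^\star}(\phi)$; otherwise it is left unchanged. Hence, by a straightforward case split, the new value of $P_\CM^{\mu^\star}(\phi)$ is always at least the old value.

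More concretely, I would let $P_\CM^{\mu^\star_i}(\phi)$ denote the value of the variable $P_\CM^{\mu^\star}(\phi)$ at the end of the $i^{th}$ iteration (with $P_\CM^{\mu^\star_0}(\phi) = 0$ as set by line 3). Then I would examine the two branches in lines 25--26: in the branch where the guard $P_\CM^{\mu_i}(\phi) > P_\CM^{\mu^\star_{i-1}}(\phi)$ holds, we have $P_\CM^{\mu^\star_i}(\phi) = P_\CM^{\mu_i}(\phi) > P_\CM^{\mu^\star_{i-1}}(\phi)$; in the other branch $P_\CM^{\mu^\star_i}(\phi) = P_\CM^{\mu^\star_{i-1}}(\phi)$. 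In both cases $P_\CM^{\mu^\star_i}(\phi) \geq P_\CM^{\mu^\star_{i-1}}(\phi)$, which establishes monotonicity.

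There is no substantive obstacle here, since the claim is a direct consequence of the algorithm's bookkeeping: $\mu^\star$ is maintained as the best policy seen so far with respect to the full agent set $\CM$, and $P_\CM^{\mu^\star}(\phi)$ is the corresponding probability. The only thing to be careful about is interpreting $P_\CM^{\mu^\star}(\phi)$ as the scalar state variable in the algorithm rather than as a function of some fixed policy; once this is clarified, the proof is essentially one line. This also contrasts nicely with Prop.~\ref{prp:syn}, which concerns the synthesis-stage quantity $P^{\mu_i}_{\CM_i}(\phi)$ and relies on a more substantive word-based argument about the monotone effect of adding agents to the subset under consideration.
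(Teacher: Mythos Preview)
Your proposal is correct and matches the paper's own proof, which is a one-line observation that $\mu^\star$ is updated to $\mu_i$ if and only if $P^{\mu_i}_\CM(\phi) > P^{\mu^\star}_\CM(\phi)$. Your case split merely makes explicit the same running-maximum argument.
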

\begin{proof}
The result directly follows from the fact that $\mu^\star$ is set to $\mu_i$ if and only if $P^{\mu_i}_\CM(\phi) > P^{\mu^\star}_\CM(\phi)$.
\end{proof}

\subsection{Minimization\label{sec:sub:min}}
The minimization stage of our approach (line 31 in Alg.~\ref{alg:ics}) aims to reduce the overall resource usage by removing those transitions and states of $\BFA_i$ that are not needed in the subsequent iterations.
We first set the minimization threshold $p_{min}$ to $p_{thr}$ if given, otherwise we set it to $P^{\mu^\star}_\CM(\phi)$.
Next, we iterate over the states of $\BFP_i$ and check the maximum probability of satisfying the mission under each available action.
Note that, the value iteration that we perform in the synthesis step already provides us with the maximum probability of satisfying $\phi$ from any state in $\BFP_i$.
Then, we remove an action $a$ from state $q_\RMA$ in $\BFA_i$ if for all $q_\RMF \in Q_\RMF$, the maximum probability of satisfying the mission by taking action $a$ at $\langle q_\RMA, q_\RMF\rangle$ in $\BFP_i$ is below $p_{min}$.
After removing the transitions corresponding to all such actions, we also prune any orphan states in $\BFA_i$, \ie states that are not reachable from the initial state.
Then, we proceed with the synthesis stage of the next iteration.

\begin{proposition}
\label{prp:min}
Minimization phase does not affect the correctness and the completeness of our approach.
\end{proposition}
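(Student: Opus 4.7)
The statement to prove is that the minimization step (line 31 of Alg.~\ref{alg:ics}) preserves both correctness (any returned $\mu^\star$ achieves the claimed probability) and completeness (any policy meeting the threshold, or any policy strictly better than the current $\mu^\star$, is still reachable in future iterations). The plan is to separate the argument into two parts, mirroring the two operations performed during minimization: (i) removal of an action $a$ from a state $q_\RMA$ of $\BFA_i$ whenever for every $q_\RMF \in \CQ_\RMF$ the maximum probability of satisfying $\phi$ from $\langle q_\RMA,q_\RMF\rangle$ in $\BFP_i$ under action $a$ falls below $p_{min}$, and (ii) deletion of states that become unreachable from $q_\RMA^0$ after (i).

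For part (i), the key observation I would establish is a per-state, per-action version of Proposition~\ref{prp:syn}: for any $q_\RMA\in\CQ_{\RMA_i}$, $a\in\alpha_{\RMA_i}(q_\RMA)$ and $q_\RMF\in\CQ_\RMF$, the maximum probability of satisfying $\phi$ from $\langle q_\RMA,q_\RMF\rangle$ via $a$ in $\BFP_i$ upper-bounds the corresponding quantity in $\BFP_{i+k}$ from any state $\langle\langle q_\RMA,q_{new}\rangle,q_\RMF\rangle$, where $q_{new}$ collects the coordinates of the agents newly added in $\CM_{i+1},\ldots,\CM_{i+k}$. This is the same argument used in the proof of Proposition~\ref{prp:syn}, now applied starting from an arbitrary state: enlarging the agent set can only restrict the set of runs whose generated words satisfy $\phi$, because each additional agent contributes propositions that may drive $\BFF$ into its rejecting sink but never into its accepting sink by itself. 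Hence if the maximum satisfaction probability from $\langle q_\RMA,q_\RMF\rangle$ under $a$ is already strictly below $p_{min}$ in $\BFP_i$, the same bound holds in every subsequent $\BFP_{i+k}$. Since $p_{min}$ equals either $p_{thr}$ or the current $P^{\mu^\star}_\CM(\phi)$, any future policy that uses such an action at a descendant of $q_\RMA$ would achieve a value no larger than this bound, so it could neither satisfy the threshold nor strictly improve $\mu^\star$ (which, by Proposition~\ref{prp:ver}, can only grow). Thus pruning such actions loses no policy the algorithm would have selected.

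For part (ii), removal of orphan states is immediate: by construction, every path in $\BFA_{i+k}$ (for any $k\ge 0$) projects onto a path in $\BFA_i$ starting at $q_{\RMA_i}^0$, because the transitions on the $\CM_i$-coordinates in $\BFA_{i+k}$ coincide with the transitions of $\BFA_i$. So any state of $\BFA_i$ that is unreachable from $q_{\RMA_i}^0$ has no extension that is reachable in any future $\BFA_{i+k}$, and discarding it cannot change any reachable behavior, hence cannot change any computed probability or the synthesized policy.

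Combining the two parts, every policy that could make Alg.~\ref{alg:ics} update $\mu^\star$, or terminate successfully at a later iteration, remains available after minimization; conversely, every $\mu^\star$ returned is still an actual policy of the unminimized system, since minimization only deletes transitions/states and never creates new ones, so its value $P^{\mu^\star}_\CM(\phi)$ computed at line 24 is unchanged. The main obstacle I expect is the careful statement and justification of the per-state generalization of Proposition~\ref{prp:syn}, in particular handling the fact that the bound must hold uniformly over all automaton components $q_\RMF$ (which is precisely why the condition in the minimization step quantifies over $q_\RMF \in \CQ_\RMF$); the rest reduces to bookkeeping about reachability and the monotonicity guarantees already proved for the sequences $\{P^{\mu_i}_{\CM_i}(\phi)\}$ and $\{P^{\mu^\star}_\CM(\phi)\}$.
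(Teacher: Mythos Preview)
Your proposal is correct and follows essentially the same two-part decomposition as the paper (action removal justified via monotonicity, then orphan-state removal justified via reachability), invoking Propositions~\ref{prp:syn} and~\ref{prp:ver} in the same places. The one noteworthy difference is that you explicitly isolate and plan to prove a \emph{per-state, per-action} refinement of Proposition~\ref{prp:syn} (the maximum probability from $\langle q_\RMA,q_\RMF\rangle$ under $a$ in $\BFP_i$ upper-bounds its counterpart in every $\BFP_{i+k}$), whereas the paper's proof simply cites the global monotonicity of the sequence $\{P^{\mu_i}_{\CM_i}(\phi)\}$ and leaves the localization implicit; your version is therefore more rigorous, since the minimization rule is stated pointwise and really does require the pointwise bound you identify, including the uniform quantification over $q_\RMF\in\CQ_\RMF$ that you correctly flag as the delicate part.
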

\begin{proof}
To prove the correctness, we need to show that for an arbitrary policy $\mu$ on the minimized MDP $\BFA_{min}$, the probability that $\BFA_{min}$ satisfies $\phi$ under $\mu$ is equal to the probability that $\BFA$ satisfies $\phi$ under $\mu$ where $\BFA$ is the original MDP before minimization.
Correctness, in this case, follows directly from the fact that, in each state $q$, we do not modify the transition probabilities associated with an action that is enabled in $q$ after minimization.
Thus, it remains to show that minimization does not affect the completeness of the approach.
We first consider the removal of orphaned states.
Since these states cannot be reached from the initial state, they also will not be a part of any feasible control policy, and their removal does not affect the completeness of the approach.
Finally, we consider the removal of those actions that drive the system to the set of target states with probability smaller than the minimization threshold.
For the case where we use $p_{thr}$, completeness is not affected as we remove only those transitions that we would not take as we are looking for control policies with $P^\mu_\CM(\phi) \geq p_{thr}$ and $P^{\mu_i}_{\CM_i}$ is a non-increasing sequence (Prop.~\ref{prp:syn}).
For the case where we use $P^{\mu^\star}_\CM$, we also remove those transitions that we would not take as $P^{\mu^\star}_\CM$ is a non-decreasing sequence (Prop.~\ref{prp:ver}).
Hence, the minimization procedure does not affect the completeness of the overall approach as well.
\end{proof}

We finally show that Alg.~\ref{alg:ics} correctly solves Prob.~\ref{prb:problem}.

\begin{proposition}
Alg.~\ref{alg:ics} solves Prob.~\ref{prb:problem}.
\end{proposition}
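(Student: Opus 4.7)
The plan is to verify termination, then check each return path of Algorithm~\ref{alg:ics} against the two cases of Problem~\ref{prb:problem}, invoking Propositions~\ref{prp:syn},~\ref{prp:ver},~\ref{prp:min} as needed.

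\textbf{Termination.} First I would observe that every non-returning iteration strictly enlarges the considered subset: line 30 adds at least one $\BFM_j \in \overline{\CM_i}$ to $\CM_{i+1}^{new}$, so $|\CM_{i+1}| > |\CM_i|$. Since $|\CM| = n$ is finite, after at most $n$ iterations $\CM_i = \CM$ and the algorithm exits through one of the guards at lines 11--17. Proposition~\ref{prp:min} guarantees that the minimization on line 31 preserves both correctness and completeness, so it does not interfere with this counting.

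\textbf{Correctness when $p_{thr}$ is given.} Here there are three possible outcomes. A return at line 14 occurs only when $\CM_i = \CM$, in which case $P^{\mu_i}_{\CM_i}(\phi) = P^{\mu_i}_\CM(\phi) \geq p_{thr}$ by the guard, so $\mu^\star = \mu_i$ trivially satisfies the specification. A return at line 27 occurs when $P^{\mu^\star}_\CM(\phi) > p_{thr}$, and by construction $P^{\mu^\star}_\CM(\phi)$ is the probability computed on the fully composed MC $\BFV_i$ built in lines 22--24, hence this is again a policy that meets the threshold on the complete system. The main obstacle is the failure report on line 12: I must show that observing $P^{\mu_i}_{\CM_i}(\phi) < p_{thr}$ at iteration $i$ rules out any feasible policy on the full system. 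For this I would invoke Proposition~\ref{prp:syn}, which gives that the sequence $\{P^{\mu_j}_{\CM_j}(\phi)\}$ is non-increasing; in particular once $\CM_j = \CM$ we have $\max_\mu P^\mu_\CM(\phi) = P^{\mu_j}_{\CM_j}(\phi) \leq P^{\mu_i}_{\CM_i}(\phi) < p_{thr}$, so no admissible policy can exceed the threshold and the failure report is justified.

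\textbf{Correctness when $p_{thr}$ is omitted.} In this branch the only return is line 17, which fires when $\CM_i = \CM$. At that point the product $\BFP_i = \BFA_i \otimes \BFF$ is built from the complete system, so the value-iteration step on line 9 yields a genuinely optimal policy $\mu_i$ for the maximal reachability problem on $\BFP_i$; by the standard MRP-to-coSafe-LTL reduction cited in the paper, $\mu_i$ therefore maximizes $P^\mu(\phi)$ over all admissible $\mu$. Setting $\mu^\star \leftarrow \mu_i$ and returning thus solves case (b) of Problem~\ref{prb:problem}. I would also note, as a sanity check, that Proposition~\ref{prp:ver} makes the running $\mu^\star$ monotonically improving on the true full-system probability, so the anytime interruption advertised in Section~\ref{sec:sub:sol} still yields a well-defined, best-so-far policy whenever the user chooses to stop the algorithm early.
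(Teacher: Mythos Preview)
Your proof is correct and follows essentially the same approach as the paper's own proof: both rely on Proposition~\ref{prp:syn} to justify the failure report at line 12 and on Proposition~\ref{prp:min} to argue that minimization does not compromise correctness or completeness. Your version is simply more thorough, adding an explicit termination argument and a systematic case analysis of every return path that the paper's three-sentence proof leaves implicit.
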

\begin{proof}
Alg.~\ref{alg:ics} combines all the steps given in this section and synthesizes a control policy $\mu^\star$ that either ensures $P^{\mu^\star}(\phi) \geq p_{thr}$ if $p_{thr}$ is given, or maximizes $P^{\mu^\star}(\phi)$.
If Alg.~\ref{alg:ics} terminates in line 12, completeness is guaranteed by the fact that $P^{\mu_i}_{\CM_i}$ is a non-increasing sequence as given in Prop.~\ref{prp:syn}.
Also, as given in Prop.~\ref{prp:min}, minimization stage does not affect the correctness and completeness of the approach.
Thus, Alg.~\ref{alg:ics} solves Prob.~\ref{prb:problem}.
\end{proof}

\section{Experimental Results\label{sec:impl}}

\begin{figure}
\subfigure{
	\centering
	\includegraphics[width=0.9\linewidth]{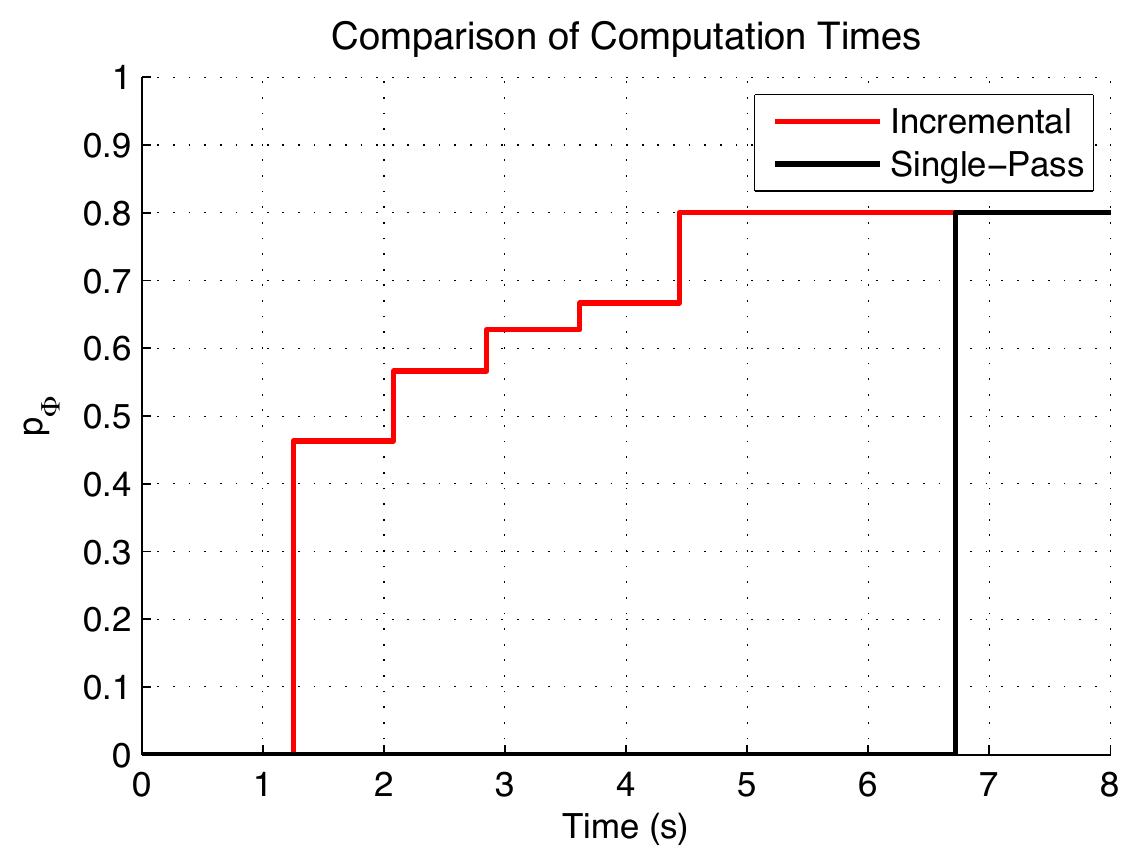}
}
\subfigure{
	\centering
	\includegraphics[width=0.9\linewidth]{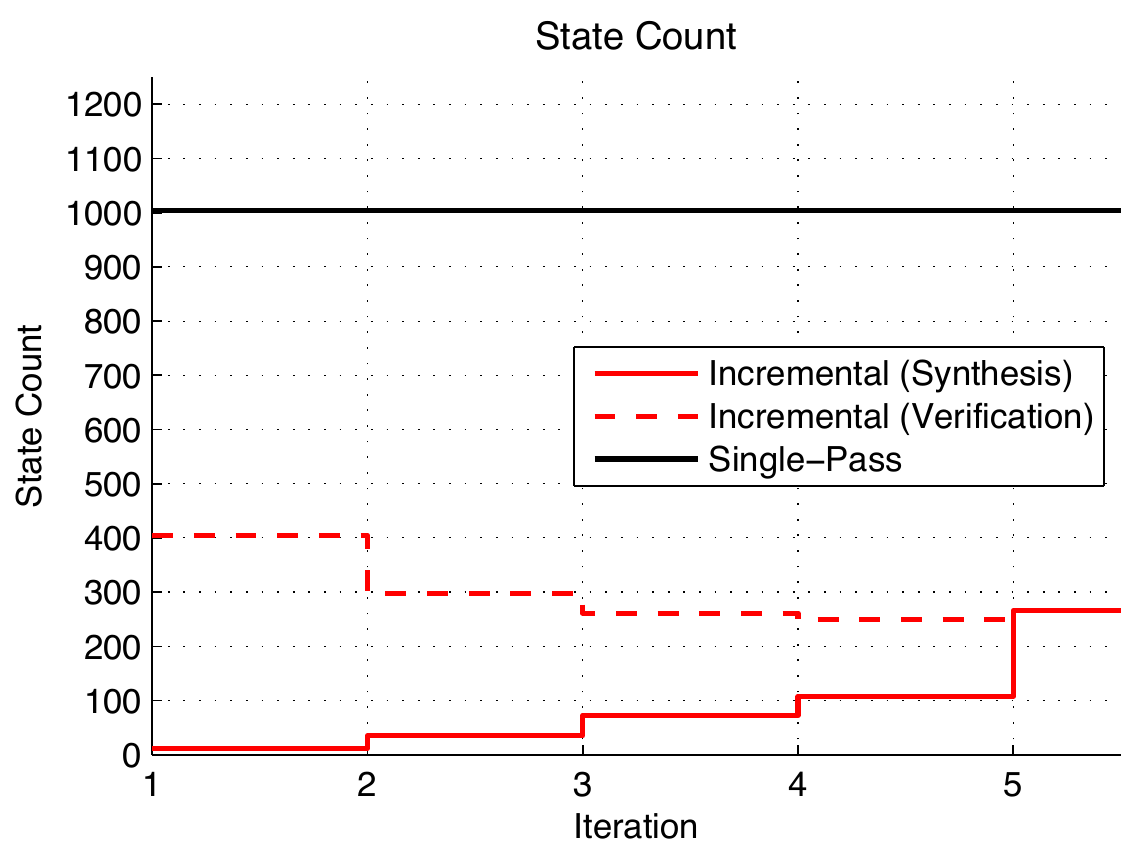}
}
\caption{Comparison of the classical single-pass and proposed incremental algorithms. The top plot shows the running times of the algorithms and the probabilities of satisfying $\phi$ under synthesized policies. The bottom plot compares the state counts of the product MDPs on which the MRP problem was solved in both approaches (black and red lines) and shows the state count of the product MC considered in the verification stage of our incremental algorithm (red dashed line).\label{fig:results}}
\end{figure}

In this section we return to the pedestrian crossing problem given in Example~\ref{exp:simple} and illustrated in Figs.~\ref{fig:drawing}, \ref{fig:simple_models}.
The mission specification $\phi$ for this example is given in Eq.~\eqref{eqn:phi}.
In the following, we compare the performance of our incremental algorithm with the performance of the classical method that attempts to solve this problem in a single pass using value iteration as in \cite{LDA:97}.

In our experiments we used an iMac i5 quad-core desktop computer and considered C++ implementations of both approaches.
During the experiments, our algorithm picked the new agent $\CM_i^{new}$ to be considered at the next iteration in the following order: $\BFM_1,\BFM_2,\BFM_3,\BFM_4,\BFM_5$, \ie according to the smallest agent first rule given in Sec.~\ref{sec:sub:ver}.

When no $p_{thr}$ was given, optimal control policies synthesized by both of the algorithms satisfied $\phi$ with a probability of 0.8.
The classical approach solved the control synthesis problem in 6.75 seconds, and the product MDP on which the MRP problem was solved had 1004 states and 26898 transitions.
In comparison, our incremental approach solved the same problem in 4.44 seconds, thanks to the minimization stage of our approach, which reduced the size of the problem at every iteration by pruning unneeded actions and states.
The largest product MDP on which the MRP problem was solved in the synthesis stage of our approach had 266 states and 4474 transitions.
The largest product MC that was considered in the verification stage of our approach had 405 states and 6125 transitions.
The probabilities of satisfying $\phi$ under policy $\mu_i$ obtained at each iteration of our algorithm were $P^{\mu_1}_\CM(\phi) = 0.463$, $P^{\mu_2}_\CM(\phi) = 0.566$, $P^{\mu_3}_\CM(\phi) = 0.627$, $P^{\mu_4}_\CM(\phi) = 0.667$, and $P^{\mu_5}_\CM(\phi) = 0.8$.
When $p_{thr}$ was given as 0.65, our approach finished in 3.63 seconds and terminated after the fourth iteration returning a sub-optimal control policy with a 0.667 probability of satisfying $\phi$.
In this case, the largest product MDP on which the MRP problem was solved had only 99 states and 680 transitions.
Furthermore, since our algorithm runs in an anytime manner, it could be terminated as soon as a control policy was available, \ie at the end of the first iteration (1.25 seconds).
Fig.~\ref{fig:results} compares the classical single-pass approach with our incremental algorithm in terms of running time and state counts of the product MDPs and MCs.

It is interesting to note that state count of the product MDP considered in the synthesis stage of our algorithm increases as more agents are considered, whereas state count of the product MC considered in the verification stage of our algorithm decreases as the minimization stage removes unneeded states and transitions after each iteration.
It must also be noted that, $|\CM_1|$, \ie cardinality of the initial agent subset, is an important factor for the performance of our algorithm.
As discussed in this section, for $|\CM_1| << |\CM|$ our algorithm outperforms the classical method both in terms of running time and memory usage.
However, for $|\CM_1|\sim|\CM|$ we expect the resource usage of our algorithm to be close to that of the classical approach, as in this case almost all of the agents will be considered in the synthesis stage of the first iteration.
We plan to address this issue in future work. %
Nevertheless, most typical finite horizon \emph{safety} missions, where the plant is expected to reach a goal while avoiding a majority or all of the agents, already satisfy the condition $|\CM_1| << |\CM|$.

\section{Conclusions \label{sec:conclusions}}
In this paper we presented a highly efficient incremental method for automatically synthesizing optimal control policies for a system comprising a plant and multiple independent agents, where the plant is expected to satisfy a high level mission specification in the presence of the agents.
We considered independent agents modeled as Markov chains and assumed that the plant was modeled as a deterministic transition system.
However, our approach is general enough to accommodate plants modeled as Markov Decision Processes.
For mission specifications, we considered syntactically co-safe Linear Temporal Logic formulas over a set of propositions that are satisfied by the components of the system.
If a probability threshold is given, our method exploits this knowledge to terminate earlier and returns a sub-optimal control policy.
Otherwise, our method synthesizes an optimal control policy that maximizes the probability of satisfying the mission.
Since our method does not need to run to completion, it has practical value in applications where a safe control policy must be synthesized under resource constraints.
For future work, we plan to extend our approach to mission specifications expressed in full LTL as opposed to a subset of it.

\bibliographystyle{IEEEtran}
\bibliography{references}

\end{document}